\definecolor{newcolor}{rgb}{.8,.349,.1}
\newcommand\norm[1]{\left\lVert#1\right\rVert}
\newcommand\abs[1]{\left\lvert#1\right\rvert}
\newcommand{\V}{\mathbf{V}}
\newcommand{\vlambda}{\mathbf{\lambda}}
\renewcommand{\v}{\mathbf{v}}
\newcommand{\w}{\mathbf{w}}
\newcommand{\x}{\mathbf{x}}
\newcommand{\X}{\mathbf{X}}
\newcommand{\zero}{\mathbf{0}}
\DeclareMathOperator{\sgn}{sgn}
\DeclareMathOperator*{\argmin}{arg\,min}
\newtheorem{theorem}{Theorem}
\newtheorem{hypothesis}{Hypothesis}
\newtheorem{empirical_evidence}{Empirical evidence}
\newtheorem{technique}{Technique}
\newcommand{\xQuant}{xQuant\xspace}
\journal{Journal of Neurocomputing}
\begin{document}

  \begin{frontmatter}

    \title{Computational Cost Reduction in Learned Transform Classifications}

    \author[mec]{Emerson Lopes Machado\corref{cor1}} \ead{bmi.machado@gmail.com}
    \cortext[cor1]{Corresponding author:}
    \author[fga]{Cristiano Jacques Miosso} 
    \author[utp]{Ricardo von Borries} 
    \author[est]{Murilo Coutinho} 
    \author[cic]{Pedro de Azevedo Berger} 
    \author[mec]{Thiago Marques} 
    \author[cic,mec]{Ricardo Pezzuol Jacobi} 
    
    \address[mec]{Graduate Program in Mechatronic Systems, University of Brasilia, Brazil}
    \address[fga]{University of Brasilia at Gama, Brazil}
    \address[utp]{Dept. of Electrical and Computer Engineering, University of Texas at El Paso, USA}
    \address[cic]{Dept. of Computer Science, University of Brasilia, Brazil}
    \address[est]{Dept. of Statistics, University of Brasilia, Brazil}

    \begin{abstract}
      We present a theoretical analysis and empirical evaluations of a novel set of techniques for computational cost reduction of classifiers that are based on learned transform and soft-threshold. By modifying optimization procedures for dictionary and classifier training, as well as the resulting dictionary entries, our techniques allow to reduce the bit precision and to replace each floating-point multiplication by a single integer bit shift. We also show how the optimization algorithms in some dictionary training methods can be modified to penalize higher-energy dictionaries. We applied our techniques with the classifier Learning Algorithm for Soft-Thresholding, testing on the datasets used in its original paper. Our results indicate it is feasible to use solely sums and bit shifts of integers to classify at test time with a limited reduction of the classification accuracy. These low power operations are a valuable trade off in FPGA implementations as they increase the classification throughput while decrease both energy consumption and manufacturing cost.
    \end{abstract}

    \begin{keyword}
      Image classification \sep Dictionary learning \sep Reduce computational cost \sep FPGA
    \end{keyword}

  \end{frontmatter}



\section{Introduction}
  \label{Introduction}
  In image classification, feature extraction is an important step, specially in domains where the training set has a large dimensional space that requires a higher processing and memory resource. A recent trend in feature extraction for image classification is the construction of sparse features, where these features consist in the representation of the signal in an overcomplete dictionary. When the dictionary is learned specific to the input dataset, the classification of sparse features can achieve results comparable to state-of-the-art classification algorithms \citep{Mairal:2012kq}. However, this approach has a drawback at test time, as the sparse coding of the input test sample is computationally intense, being impracticable to embedded applications that have scarce computational and power resources. 

  A recent approach to this drawback is to learn a sparsifying transform from the target image dataset \citep{Fawzi:2014gl,Shekhar:2014ik,Ravishankar:2013jj}. Therefore, the learned classifier has an architecture that can be seen as a feedforward neural network (FFNN) with one hidden layer and no bias. At test time, this approach reduces the sparse coding of the input image to a simple matrix-vector multiplication followed by a soft-threshold, which can be efficiently realized in hardware due to its inherent parallel nature. Nevertheless, these matrix-vector multiplications require floating-point operations, which may have a high cost in hardware, specially in FPGA, as it increases the fabrication cost and demands a higher energy to operate.

  Exploring some properties we derive from these classifiers, we propose a set of techniques to reduce their computational cost at test time, which we divide into four main groups:
  \begin{inparaenum}[(i)] 
    \item decrease the dynamic range of the dictionary first by penalizing the $\ell_2$ norm of its entries at the training phase, then by zeroing out its entries that have absolute values smaller than a trained threshold;
    \item use test images in integer --- which is the same format they are sampled by analog-to-digital converters (ADC) --- instead of their scaled normalized version (floating-point) and thus replace the costly floating-point operations by integer operations, which are cheaper to implement in hardware and do not affect the classification accuracy;
    \item quantize the integer valued test images and thus decrease the number of bits needed to represent them;
    \item and quantize both transform dictionary and classifier by approximating its entries to their nearest power of 2 and thus replace each multiplication by a simple bit shift.
  \end{inparaenum} 
  
  From now on, we refer to this set of techniques as \xQuant. As a study case for \xQuant, we use a recent classification algorithm named Learning Algorithm for Soft-Thresholding classifier (LAST), which learns both the sparse representation of the signals and the hyperplane vector classifier at the same time. Our tests use the same datasets used in the paper that introduces LAST and our results indicate that our techniques reduce the computational cost while not substantially degrading the classification accuracy. Moreover, in a particular dataset we tested, our techniques substantially increased the classification accuracy.

  To the best of our knowledge, this paper presents the first generic approach to reduce the computational cost at test time of classifiers that are based on learned transform. This has a valuable application in embedded systems where power consumption is critical and computational power is restricted. Furthermore, \xQuant dismiss the necessity of using DSPs for intense matrix-vector operations in FPGAs architectures for image classification, lowering the overall manufacturing cost of embedded systems. 

  Even though all simulations we ran to test our techniques were performed on image classification using LAST, our proposed techniques are sufficiently general to be applied on different problems and different classification algorithms that use matrix-vector multiplications to extract features, such as Extreme Learning Machine (ELM) \citep{Huang:2006ut} and Deep Neural Networks (DNN) \citep{Schmidhuber:2015cz}.

\section{Related Work}
  \label{Related Work}
  The literature on reducing the computational cost of classifiers is vast and thus we only present some of the significant trends. Also, it is worth noting that quantization strategies to reduce resource usage of FFNN classifiers implemented in FPGA are not new and have been used in the past century with success. In~\cite{Marchesi:1993fz} for example, a quantization scheme is proposed to eliminate all multiplications during the test time. After training the parameters of a feedforward neural network, they approximate these parameters to a power of two and retrain the network letting only the bias values to change freely in the real domain, as these bias do not participate in multiplications. This reduces each multiplication to a single operation of bit shift. The problem with this approach is that it still relies on floating-point operations, which are costly in applications with limited energy and/or small computational power.

  In~\cite{Courbariaux:2014vk}, \cite{Gupta:2015vp}, and~\cite{Lin:2015ut}, different quantization strategies are presented to allow the use of fixed-point values during the training and test time. These works lack the power reducing benefits from quantization schemes that approaches the network parameters to powers of two as in~\cite{Marchesi:1993fz}\cite{Machado:2015vr}. This was probably an unknown feature to the authors. In~\cite{Lin:2016tx}, the authors start to experiment with quantization schemes that allow a higher computational cost reduction. They quantize the network parameters to have only -1s and 1s to reduce multiplications to simple sign changes with only a small decrease of the classification accuracy. \cite{Courbariaux:2015vy} and~\cite{Courbariaux:2016tm} also follow the same lead. This quantization scheme is drastic and eliminates all multiplications and bit shifts at test time, but may substantially reduce the learning capacity of the neural network. In~\cite{Rastegari:2016tn}, the authors propose a post-processing scheme to approximate both the trained parameters of a CNN and the input images to -1s and 1s. This approach allows the convolutions to be estimated by XNOR and bit-counting operations. Nevertheless, this oversimplification comes with the price of a higher degradation of the classification accuracy compared to the original classifier.

  Our approach differs from these aforementioned in many points. First, it can be easily adapted to any learning algorithm as it does not rely on a specific one, and, thus, can be used in different network architectures and different amounts of neurons. Also, \xQuant can also be applied after training the classifier. Second, it drops all floating-point operations in favor of integer ones. This avoids the costly normalization and denormalization techniques required in floating-point operations. Third, it has an optional strategy to reduce the dynamic range of the parameters during training and consequently reduce the number of bits necessary to store them. This strategy penalizes parameter values that causes an increase in the dynamic range by forcing them to be closer to their average. Fourth, \xQuant does not hurt much the classification accuracy as the approximation to -1s and 1s performed in some of the previously mentioned works.

\section{Overview of Sparse Representation Classification}
  \label{Overview of Sparse Representation Classification}
  In this section, we briefly review both synthetical and analytical sparse representation of signals along with the threshold operation used as a sparse coding approach (Section~\ref{Sparse Representation of Signals}). We also review LAST (Section~\ref{Learning Algorithm for Soft-Thresholding Classifier (LAST)}).

  \subsection{Sparse Representation of Signals}
    \label{Sparse Representation of Signals}
    Let $\mathbf{x} \in \mathbb{R}^{n}$ be a signal vector and $\mathbf{D} \in \mathbb{R}^{n \times N}$ be an overcomplete dictionary. The sparse representation problem corresponds to finding the coefficient vector $\mathbf{z}^* \in \mathbb{R}^{N}$ that minimizes the ${\ell_0}$ norm
    \begin{equation}
      \label{eq:sparse_representation_ell_0}
      \mathbf{z}^{*} = \argmin_{\mathbf{z}} \norm{\mathbf{z}}_{0} \text{ s.t. } \mathbf{x} = \mathbf{D}\mathbf{z},
    \end{equation}
    where $\norm{\cdot}_{0}$ measures the number of nonzero coefficients. Therefore, the signal $\mathbf{x}$ can be synthesized as a linear combination of $k$ nonzero columns of the dictionary $\mathbf{D}$, also called synthesis operator. The solution of (\ref{eq:sparse_representation_ell_0}) requires testing all possible sparse vectors $z$, which is a combination of $N$ entries taken $k$ at a time. This is an NP-hard problem, but an approximate solution can be obtained by using the $\ell_1$ norm instead of the $\ell_0$ norm, i.e.
    \begin{equation}
      \label{eq:sparse_representation_ell_1}
      \mathbf{z}^* = \argmin_{\mathbf{z}} \norm{\mathbf{z}}_{1} \text{ s.t. } \mathbf{x} = \mathbf{D}\mathbf{z},
    \end{equation}
    where $\norm{\cdot}_{1}$ is the $\ell_1$ norm. The solution of (\ref{eq:sparse_representation_ell_1}) can be computed by solving the problem of minimizing the $\ell_1$ norm of the coefficients among all decompositions, which is convex and can be solved efficiently. If the solution of (\ref{eq:sparse_representation_ell_1}) is sufficiently sparse, it will be equal to the solution of (\ref{eq:sparse_representation_ell_0}) \citep{Donoho:2001wc}.

    Sparse coding transform \citep{Ravishankar:2013jj} is another way of sparsifying a signal, where the dictionary is a linear transform that maps the signal to a sparse representation. For example, signals formed by the superposition of sinusoids have a dense representation in the time domain and a sparse representation in the frequency domain. For this type of signal, the Fourier transform is the sparse coding transform. Quite simply, $\mathbf{D^\top} \mathbf{x} = \mathbf{z}$ is the sparse transform of $\mathbf{x}$, where $\mathbf{z}$ is the sparse coefficient vector. In general, the transform $\mathbf{D}$, can be a well structured fixed base such as a DFT or learned specifically to the target problem represented in the training dataset. A learned dictionary can be an overcomplete dictionary learned from the signal dataset, as in \citep{Shekhar:2014ik}, a square invertible dictionary, as in \citep{Ravishankar:2013jj}, or even a dictionary without restrictions on the number of atoms, as in LAST \citep{Fawzi:2014gl}.

    When a signal is corrupted by additive white Gaussian noise (AWGN), its transform will result into a coefficient vector that is not sparse. A common way of making it sparse is to apply a threshold operation to its entries right after the transform, where the entries lower than the specified threshold are set to zero. Among the existing threshold operators, soft-threshold is the one that, in addition to the threshold operation, subtracts the remaining values by the threshold, shrinking them toward zero \citep{Donoho:1994ux}.

    Let $\mathbf{z} = (z_i)^{N}_{i=1}$ be the coefficients of a sparse representation of a signal corrupted by AWGN given by
    \begin{equation}
      \label{noisy_sparse}
      z_i = s_i + \epsilon \, e_i  \;\;\;\;  i = 1,...,N
    \end{equation}
    where $e_i$ is the noise i.i.d.\ as $\mathcal{N}(0,1)$, $\epsilon > 0$ is the noise level, and $s_i$ are the coefficients of the sparse representation of the pure signal.

    Because the $s_i$ coefficients in (\ref{noisy_sparse}) are sparse, there exists a threshold $\alpha$ that can separate most of the pure signal $s_i$ from the noise $e_i$ using the soft-thresholding operator~\citep{Donoho:1994ux}
    \begin{equation}
      \label{eq:soft_threshold}
      h_\alpha(\mathbf{z}) = \sgn(\mathbf{z}) \max(0,\abs{\mathbf{z}} - \alpha),
    \end{equation}
    where $\sgn(\cdot)$ is the sign function. For classification tasks, the best estimate of $\alpha$ can be computed using the training set. 

  \subsection{Learning Algorithm for Soft-Thresholding Classifier (LAST)}
    \label{Learning Algorithm for Soft-Thresholding Classifier (LAST)}
    LAST \citep{Fawzi:2014gl} is an algorithm based on a learned transform followed by a soft-threshold, as described in Section~\ref{Sparse Representation of Signals}. Differently from the original soft-threshold map (\ref{eq:soft_threshold}), LAST uses a soft-threshold version that also sets to zero all negative values, i.e., $h_{\alpha}(\mathbf{z}) = \max(0,\mathbf{z}-\alpha)$, where $\alpha$ is the threshold, also called sparsity parameter. When $\alpha = 0$, this threshold operator can be seen as the \emph{relu} activation function, which has produced good results in deep neural network architectures~\citep{Glorot:2011tm,Nair:2010vq,Maas:2013tn,Zeiler:2013ts}. We chose LAST to be our study case because of its simplicity in the learning process of the sparsifying dictionary and the classifier hyperplane.

    For the training cases $\mathbf{X} = [\mathbf{x}_1| \dots |\mathbf{x}_m] \in \mathbf{R}^{n \times m}$ with labels $\mathbf{y} = [\mathbf{y}_1| \dots |\mathbf{y}_m] \in \{-1,1\}^m$,  the sparsifying dictionary $\mathbf{D} \in \mathbb{R}^{n \times N}$ that contains $N$ atoms and the classifier hyperplane $\mathbf{w} \in \mathbb{R}^N$ are estimated using the supervised optimization
    \begin{equation}
      \label{eq:objective_function_LAST}
      \min_{\mathbf{D},\mathbf{w}} \sum^{m}_{i=1}H(y_i \mathbf{w^\top} h_{\alpha}(\mathbf{D^\top x_i})) + \frac{v}{2}\norm{\mathbf{w}}^2_2,
    \end{equation}
    where $H$ is the hinge loss function $H(x) = \max(0,1-x)$ and $v$ is the regularization parameter that prevents the overfitting of the classifier $\mathbf{w}$ to the training set. At test time, the classification of each test case $\mathbf{x}$ is performed by first extracting the sparse features from the signal $\mathbf{x}$, using 
    \begin{equation}
      \label{eq:LAST_classification_test_time}
      \mathbf{f} = \max(0, \mathbf{D^\top x} - \alpha),
    \end{equation}
    and then by the classification of these features using $c = \mathbf{w^\top f} > 0$, where $c$ is the class returned by the classifier. We direct the reader to \citep{Fawzi:2014gl}, for a deeper understanding of LAST.

\section{Proposed Techniques}
  \label{section: Proposed Techniques}
  In this section we introduce a set of techniques for simplifying the test-time computations of classifiers based on learned transforms and soft-threshold. We start by describing in Section~\ref{sec:datasets} the dataset of images to which we apply the proposed techniques for validation. Next, we present in Section~\ref{Theoretical Results on Computational Cost Reduction} our main theoretical findings supporting \xQuant, which are finally presented in  Section~\ref{Proposed Techniques}.

  \subsection{Datasets for Training and Validation}
  \label{sec:datasets}
    The first two datasets contain patches extracted from the textures presented in Figure~\ref{fig_texture_datasets}, which belong to the Brodatz dataset \citep{Valkealahti:1998hl}. The built the datasets using the following methodology: First, we separate each image in half and then use the left half to create the 500 training patches and the right half to create the 500 test patches. These patches are subsets of each image containing $12\times12$ pixels. Next, for each patch we stack its 12 columns and then normalize the resulting vector to have $\ell_2$ norm equals to $1$. As in \citep{Fawzi:2014gl}, the first task consisted in discriminating test patches from the images \emph{bark} and \emph{woodgrain}, and the second task consisted in discriminating patches from the images \emph{pigskin} and \emph{pressedcl}. For future reference, we named the first task as bark\_woodgrain and the second task as pigskin\_pressedcl.

    \begin{figure}[h]
      \centering
      \subfloat[\emph{bark}]{\includegraphics[width=0.2\textwidth]{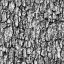}} \hspace{0.5em}
      \subfloat[\emph{woodgrain}]{\includegraphics[width=0.2\textwidth]{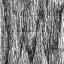}} \hspace{0.5em}
      \subfloat[\emph{pigskin}]{\includegraphics[width=0.2\textwidth]{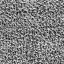}} \hspace{0.5em}
      \subfloat[\emph{pressedcl}]{\includegraphics[width=0.2\textwidth]{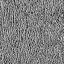}}
      \caption{Textures we used to generate the first two binary datasets.}
      \label{fig_texture_datasets}
    \end{figure}

    The third binary dataset was built using a subset of the CIFAR-10 image dataset \citep{Krizhevsky:2009tr}. This dataset contains 10 classes of 60\,000 $32\times32$ tiny RGB images, with 50\,000 images in the training set and 10\,000 in the test set. Each image has 3 color channels and it is stored in a vector of $32\times32\times3 = 3\,072$ positions. The dataset we used was the subset formed by the images labeled as \emph{deer} or \emph{horse}. 

    The first multiclass dataset was the MNIST dataset \citep{LeCun:1998we}, which contains 70\,000 images of handwritten digits of size $28\times28$ distributed in 60\,000 images in the training set and 10\,000 images in the test set. As in \citep{Fawzi:2014gl}, all images have zero-mean and $\ell_2$ norm equals to $1$. 

    The last task consisted in the classification of all 10 classes from the CIFAR-10 image dataset.

  \subsection{Theoretical Results on Computational Cost Reduction}
    \label{Theoretical Results on Computational Cost Reduction}

     For the purpose of brevity, we coined the term \emph{powerize} to concisely describe the operation of approximating each value from a set of values to its respective closest power of 2.

    \begin{theorem}
      \label{theorem:powerize upper bound}
      The relative distance ${R(x)}$ between any real scalar ${x}$ and its powerized version ${P_2(x)}$, defined by ${R(x)=\abs{P_2(x)-(x)}/x}$, is upper bounded by $1/3$.
    \end{theorem}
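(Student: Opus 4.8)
The plan is to reduce to the case of a positive scalar and then locate, within a single interval between consecutive powers of $2$, the point where the relative error $\abs{x - \hat{x}}/\abs{x}$ is largest. For $x = 0$ the statement is vacuous, and since powerizing commutes with sign (the nearest power of $2$ to $-x$ is the negative of the nearest power of $2$ to $x$), the relative distance is invariant under $x \mapsto -x$. Hence I assume $x > 0$ throughout, writing $\hat{x}$ for its powerized version.

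First I would write $x = 2^k \cdot 2^s$, where $k = \lfloor \log_2 x \rfloor$ and $s \in [0,1)$, so that $x$ lies between the two candidate powers $2^k$ and $2^{k+1}$. The value $\hat{x}$ is whichever candidate is nearer in absolute value; comparing the gaps $2^k(2^s - 1)$ and $2^k(2 - 2^s)$ shows that $2^k$ is selected exactly when $2^s \le 3/2$, i.e.\ $s \le \log_2(3/2)$, and $2^{k+1}$ otherwise. Note that the crossover occurs at $x = 3 \cdot 2^{k-1}$, which is precisely the arithmetic midpoint of the interval $[2^k, 2^{k+1}]$.

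Next I would compute the relative distance $r(s) = \abs{x - \hat{x}}/x$ in each regime. When $\hat{x} = 2^k$ the factor $2^k$ cancels and $r(s) = 1 - 2^{-s}$, which is increasing in $s$; when $\hat{x} = 2^{k+1}$ one gets $r(s) = 2^{1-s} - 1$, which is decreasing in $s$. Both pieces therefore attain their supremum at the crossover $s = \log_2(3/2)$, where $2^s = 3/2$ and each expression evaluates to $1 - 2/3 = 1/3$. Thus $r(s) \le 1/3$ for every $s \in [0,1)$, with equality attained exactly at the midpoint, which establishes the claimed bound.

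The arithmetic here is elementary, so the only genuine care is conceptual rather than computational: I must confirm that the two piecewise formulas for $r$ meet continuously and peak at the \emph{same} point, since this is what pins the extremum at $1/3$ rather than at some larger interior value, and I must dispose of the zero and sign conventions cleanly at the outset so that the single-interval analysis is truly exhaustive over all real $x$.
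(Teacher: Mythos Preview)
Your argument is correct and follows essentially the same approach as the paper: reduce to a single interval $[2^k,2^{k+1}]$ and observe that the worst relative error occurs at the arithmetic midpoint $3\cdot 2^{k-1}$, where it equals $1/3$. Your version is more careful---you handle the sign and zero cases explicitly and justify via monotonicity that the midpoint is indeed the maximizer, whereas the paper simply asserts this---but the core idea is identical.
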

    \begin{proof}
      Let $2^{n} \le x \le 2^{n+1}$, $n \in \mathbb{Z}$ and $d_{P_2}(x) = \abs{P_2(x)-(x)}$ be the distance between $x$ and its powerized version. The distance $d_{P_2}(x)$ is maximum when $x = x_m = \frac{1}{2} \, (2^{n+1} + 2^{n}) = 2^{n-1} \, 3$, which is the middle point between both closest power of 2.
      
      Therefore, the distance $d_{P_2}(x_m) = \abs{x_m - 2^n} = \abs{2^{n-1} \, 3 - 2^n} = \abs{2^{n-1} \, (3 - 2)} = \abs{2^{n-1}} = \frac{x_m}{3}$, and so the maximum relative distance between $x$ and its powerized version is $R(x) = d_{P_2}(x_m)/x_m$, which is equal to $1/3$.
    \end{proof}

    We now show how the classification accuracy on the test is influenced by small variations introduced in the entries of the model $(\mathbf{D}, \mathbf{w})$. Using the datasets bark\_woodgrain and pigskin\_pressedcl described in Section~\ref{sec:datasets}, we trained an initial model $(\mathbf{D}, \mathbf{w})$, with 50 atoms, and created 50 versions $(\mathbf{D},\mathbf{w})^i$, $i = 1, 2, \dotsc, 50$ using the following steps. Each model $(\mathbf{D},\mathbf{w})^i$ were built by multiplying the entries of the initial model $(\mathbf{D}, \mathbf{w})$ by a random value chosen from the uniform distribution on the open interval $(1 - d_i, 1 + d_i)$, where $d_i \in \{0.02, 0.04, 0.06, \dotsc, 1\}$. Next, we evaluated all models on the test set.

    To get a better estimate of the classification accuracy of each model, we performed the above steps ten times on different initial models $(\mathbf{D}, \mathbf{w})$ trained using different initial values. The results, shown in Figure~\ref{fig:noise}, indicate a clear trade-off between the classification accuracy and how far the entries of $(\mathbf{D},\mathbf{w})^i$ are displaced from the corresponding entries of the original models $(\mathbf{D}, \mathbf{w})$.

    \begin{figure}[ht]
      \centering
      \subfloat[bark\_woodgrain]{\includegraphics[width=0.45\textwidth]{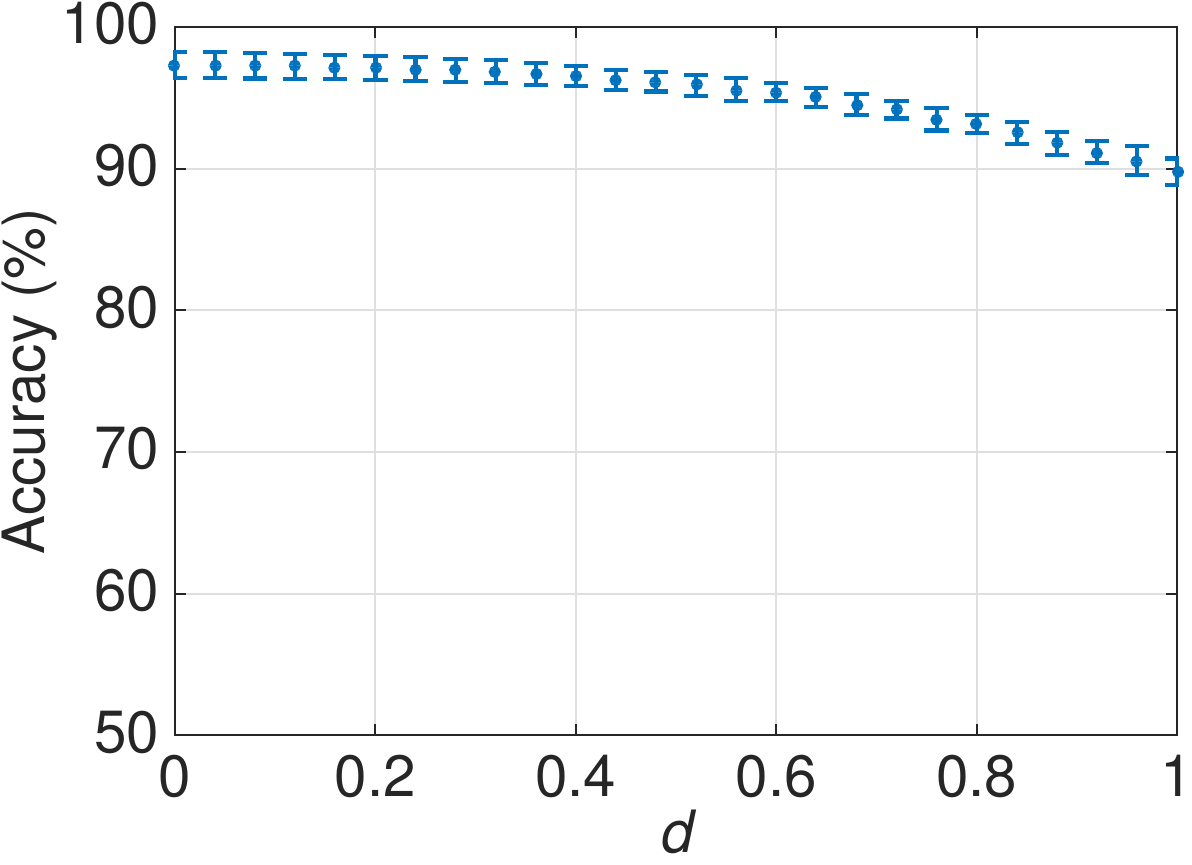}} \hspace{1em}
      \subfloat[pigskin\_pressedcl]{\includegraphics[width=0.45\textwidth]{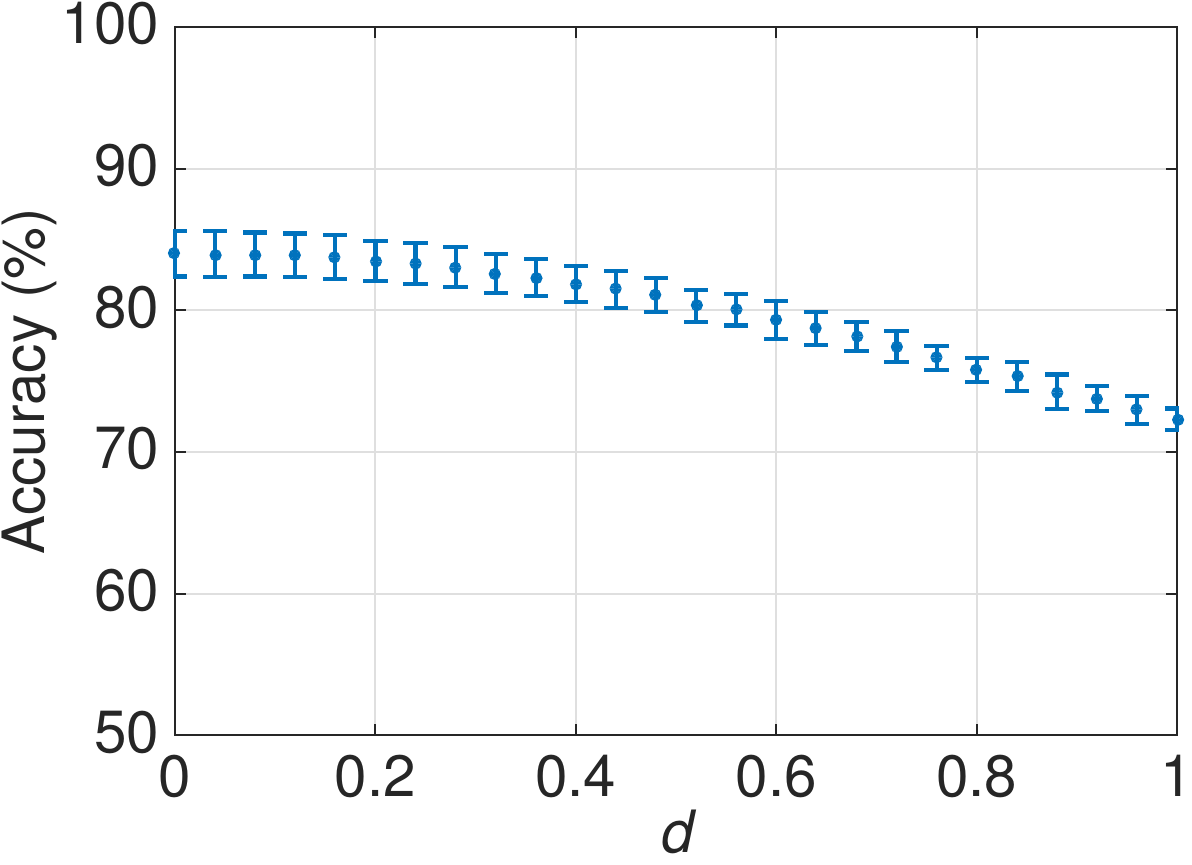}}
      \caption[]{Impact on the classification accuracy when the entries of the dictionary $\mathbf{D}$ and classifier $\mathbf{w}$ are randomly modified up to a certain level $d$.}
      \label{fig:noise}
\end{figure}
    
    \begin{hypothesis}
      \label{hypothesis:powerize}
      The model $(\mathbf{D},\mathbf{w})$ can be powerized at the cost of a slight classification accuracy decrease.
    \end{hypothesis}

    It is worth noting that the Theorem~\ref{theorem:powerize upper bound} guarantees an upper bound of $1/3$ for the relative distance between any real scalar $x$ and its powerized version. Therefore, it is reasonable to hypothesize that the classification accuracy using the powerized pair $(\mathbf{D},\mathbf{w})_{power}$ is no worse than using $(\mathbf{D},\mathbf{w})^i$, when $d_i = 1/3$, shown in Figure~\ref{fig:noise}. To support this hypothesis, we performed another simulation with the datasets bark\_woodgrain and pigskin\_pressedcl. for each dataset, we trained 10 models $(\mathbf{D},\mathbf{w})^i$ on different random versions of the training set and evaluated them and their respective powerized versions $(\mathbf{D},\mathbf{w})^i_{power}$ on the test set. Regarding the bark\_woodgrain dataset, the original model accuracy were $97.33\%\,(0.93)$ and the powerized model accuracy were $97.00\%\,(1.06)$. As for the pigskin\_pressedcl, the original model accuracy were $84.00\%\,(1.61)$ and the powerized model accuracy were $82.65\%\,(1.26)$.

    \begin{theorem}
      \label{theorem_raw_signals}
      Let $\X_{int}$ be a training set formed integer valued vectors and $\X$ be its normalized version with norm $\ell_2 = 1$, where the model $(\mathbf{D},\mathbf{w})$ is trained on. The classification accuracy of the both raw signals $\X_{int}$ and normalized signals $\X$ are exactly the same when the sparsity parameter $\alpha$ in~\emph{(\ref{eq:LAST_classification_test_time})} is $\alpha = \norm{\x_{int}}_2$ for each $\x_{int} \in \X_{int}$.
    \end{theorem}

    \begin{proof}
      Let $\mathbf{x}_{int}$ and $\mathbf{x}$ be respectively a raw vector from the test set and its normalized version, with $\norm{\mathbf{x}}_2 = 1$. Let also $(\mathbf{D},\mathbf{w})$ be the model trained with $\alpha = 1$. Therefore, the extracted features are $\mathbf{f} = \mathbf{D^\top} \mathbf{x} = \mathbf{D^\top} \frac{\mathbf{x}_{int}}{\norm{\mathbf{x}_{int}}_2}$ and the soft-thresholded feature is $\mathbf{f}_{\alpha} = \max(0, \mathbf{f} - \alpha) = \max(0, \mathbf{D^\top} \frac{\mathbf{x}_{int}}{\norm{\mathbf{x}_{int}}_2} - 1) = \frac{1}{\norm{\mathbf{x}_{int}}_2} \max(0, \mathbf{D^\top} \mathbf{x}_{int} - \norm{\mathbf{x}_{int}}_2)$. Finally, the classification of $\mathbf{x}_{int}$ is $c = (\mathbf{w} \, \frac{1}{\norm{\mathbf{x}_{int}}_2} \max(0, \mathbf{D^\top} \mathbf{x}_{int} - \norm{\mathbf{x}_{int}}_2) > 0)$.

      As the $\ell_2$ norm of any real vector different from the null vector is always greater than 0, then $\frac{1}{\norm{\mathbf{x}_{int}}_2} > 0$, and, thus $c = (\mathbf{w} \max(0, \mathbf{D^\top} \mathbf{x}_{int} - \norm{\mathbf{x}_{int}}_2) > 0)$.

      Therefore, as $\mathbf{x} = \mathbf{x}_{int} / \norm{\mathbf{x}_{int}}_2$, the expressions $c = (\mathbf{w} \max(0, \mathbf{D^\top} \mathbf{x} - \alpha) > 0)$, with $\alpha = 1$, and $c = (\mathbf{w} \max(0, \mathbf{D^\top} \mathbf{x}_{int} - \alpha) > 0)$, with $\alpha = \norm{\mathbf{x}_{int}}_2$ are equivalent.
    \end{proof}

    \begin{empirical_evidence}
      Forcing the dictionary $\mathbf{D}$ to be sparse by hard thresholding its entries up to a certain level will decrease its dynamic range and thus reduce the number of bits necessary to compute $\mathbf{D}^\top \mathbf{X}$ at the cost of a slight classification accuracy decrease.
    \end{empirical_evidence}

    We hypothesized that forcing $\mathbf{D}$ to be sparse would decrease its dynamic range with no substantial decrease of its classification accuracy. To support our hypothesis we performed another simulation with the datasets bark\_woodgrain and pigskin\_pressedcl. For each dataset, we trained a model $(\mathbf{D},\mathbf{w})$ and created 14 versions of it by hard-thresholding the entries of $\mathbf{D}$ using 14 threshold values linearly spaced between $0$ and $4$. Subsequently, we divided each element of the hard-thresholded dictionary $\mathbf{D}_{t}$ by the lowest value from $\abs{\mathbf{D}_{t}}$ that is different from 0. 

    Finally, we evaluated all resulting models on the test set. For a better estimate of the classification accuracy, we performed the above steps on 10 models $(\mathbf{D},\mathbf{w})$ trained on different random versions of the training set and computed their average. As shown in Figure~\ref{fig:hard_threshold_accuracy}(a), the first threshold different from zero already reduces the bit precision of $\mathbf{D}_{t}$ to less than half of the original while slightly decreasing its classification accuracy. Also, the third threshold different from 0 shown in Figure~\ref{fig:hard_threshold_accuracy}(b) almost maintains the same classification accuracy while reducing its dynamic range to less than half of the original.
    
    \begin{figure}[ht]
      \centering
      \subfloat[bark\_woodgrain]{
        \begin{minipage}{0.45\columnwidth}
          \includegraphics[width=1\textwidth]{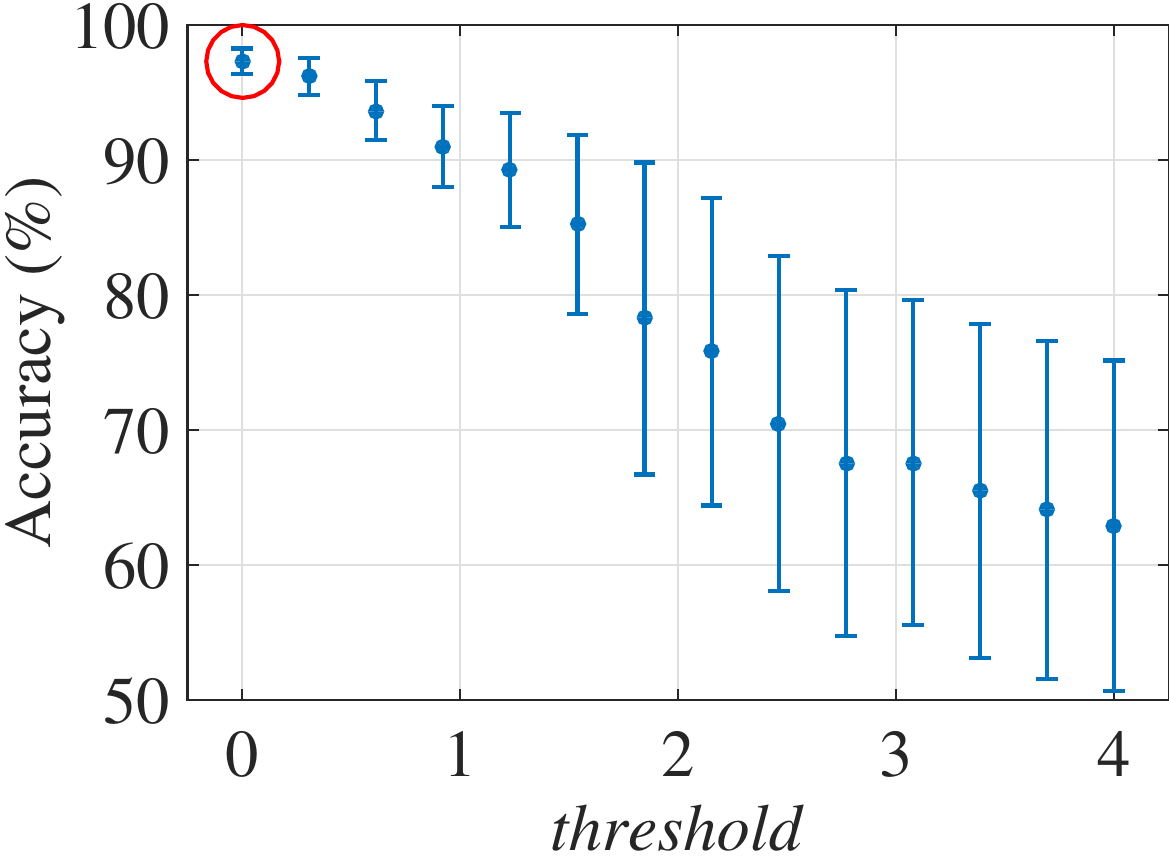} \\
          \includegraphics[width=0.968\textwidth,right]{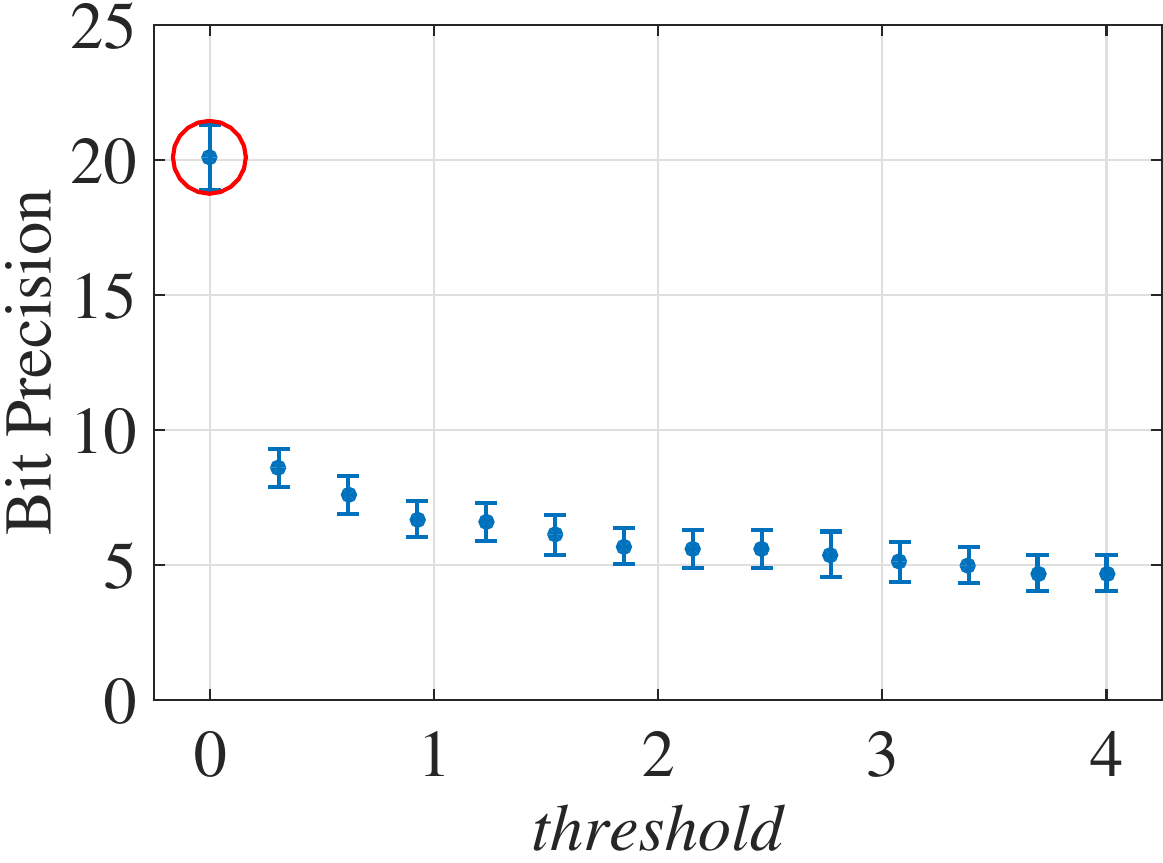}
        \end{minipage}
      } \hspace{1em}
      \subfloat[pigskin\_pressedcl]{
        \begin{minipage}{0.45\columnwidth}
          \includegraphics[width=1\textwidth]{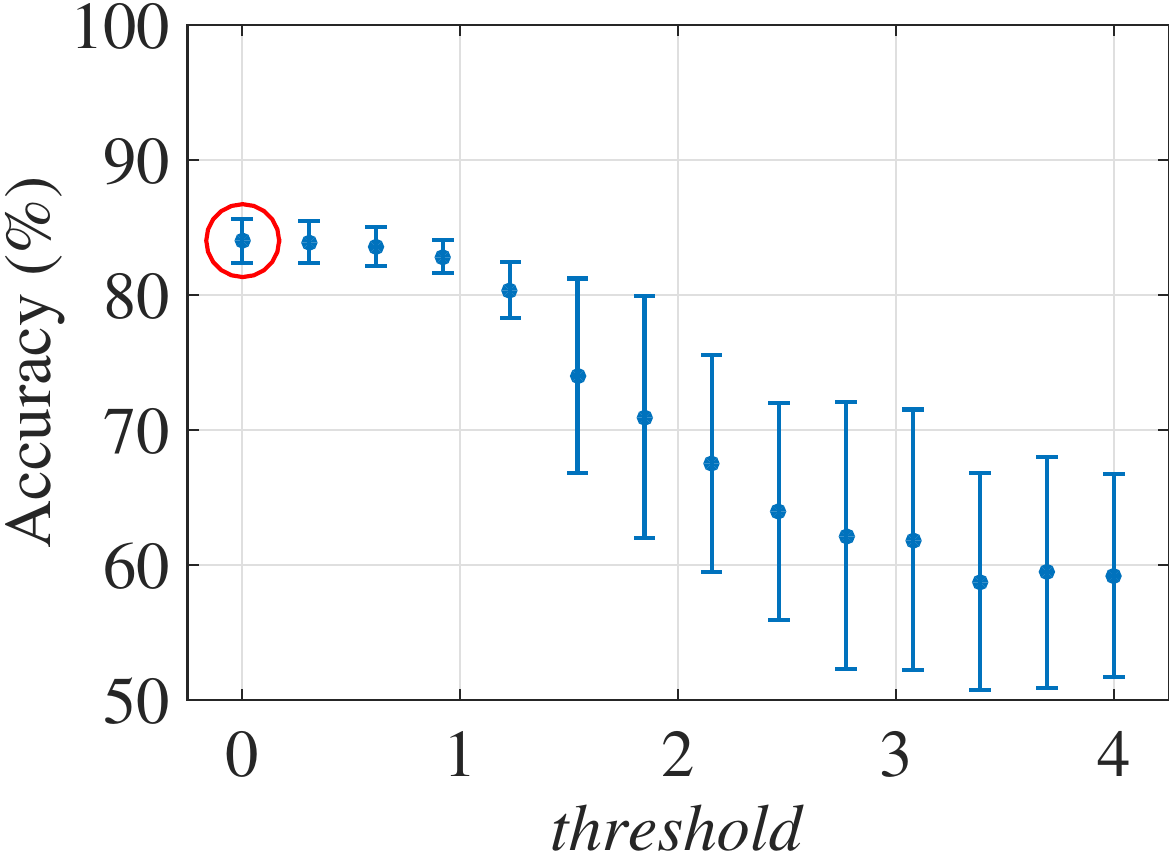} \\
          \includegraphics[width=0.968\textwidth,right]{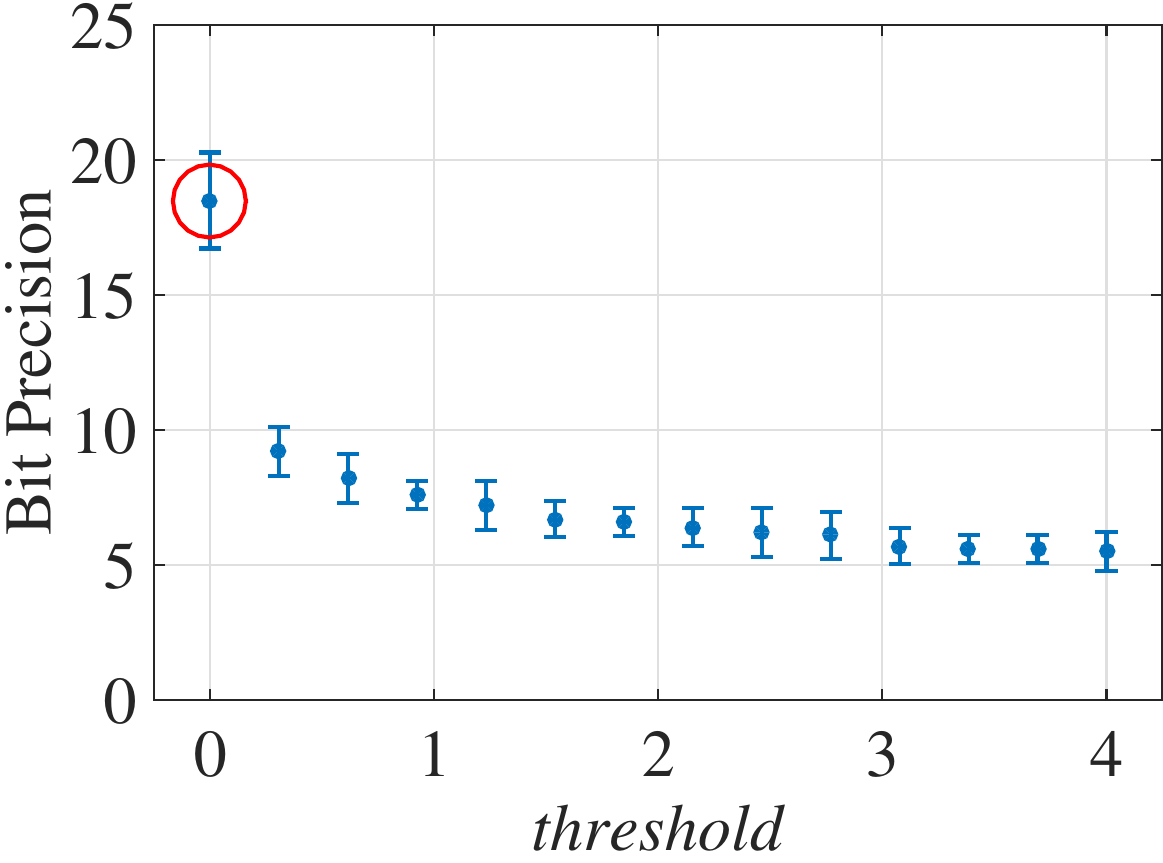}
        \end{minipage}
      } 



      \caption[]{Impact on the classification accuracy when hard threshold is used to reduce the bit precision of dictionary $\mathbf{D}$. The values shown are the average of the classification accuracy on the test set evaluated with 10 models $(\mathbf{D},\mathbf{w})$, with 50 atoms, trained with different training sets. The original results are marked with a red circle. The datasets are described in Section~\ref{sec:datasets}.}
      \label{fig:hard_threshold_accuracy}
\end{figure}

    \begin{empirical_evidence}
      Quantizing the integer valued images from the test set $\mathbf{X}_{int}$ up to a certain level will decrease the dynamic range of $\mathbf{X}_{int}$ and thus reduce the number of bits necessary to compute $\mathbf{D}^\top \mathbf{X}_{int}$ at the cost of a slight classification accuracy decrease.
    \end{empirical_evidence}

    We also hypothesized the original integer valued signals were unnecessarily over quantized and that their quantization level could be decreased while not substantially worsening the classification accuracy. To support our hypothesis, we performed another simulation with the datasets bark\_woodgrain and pigskin\_pressedcl. For each dataset, we averaged the results of one thousand runs consisting in 10 models $(\mathbf{D},\mathbf{w})$ trained using different training sets and evaluated on different quantized versions of the test set. The images from each test set $\mathbf{X}_{int}$ were quantized using levels ranging from 1 to 15. The results are shown in Figure~\ref{fig:quantization_accuracy}. Its worth noting in this figure that images from both datasets can have their bit precision reduced to 2 (Quantization level equals to 2 and 3) while having a limited decrease of the classification accuracy.

    \begin{figure}[ht]
      \centering
      \subfloat[bark\_woodgrain]{\includegraphics[width=0.45\textwidth]{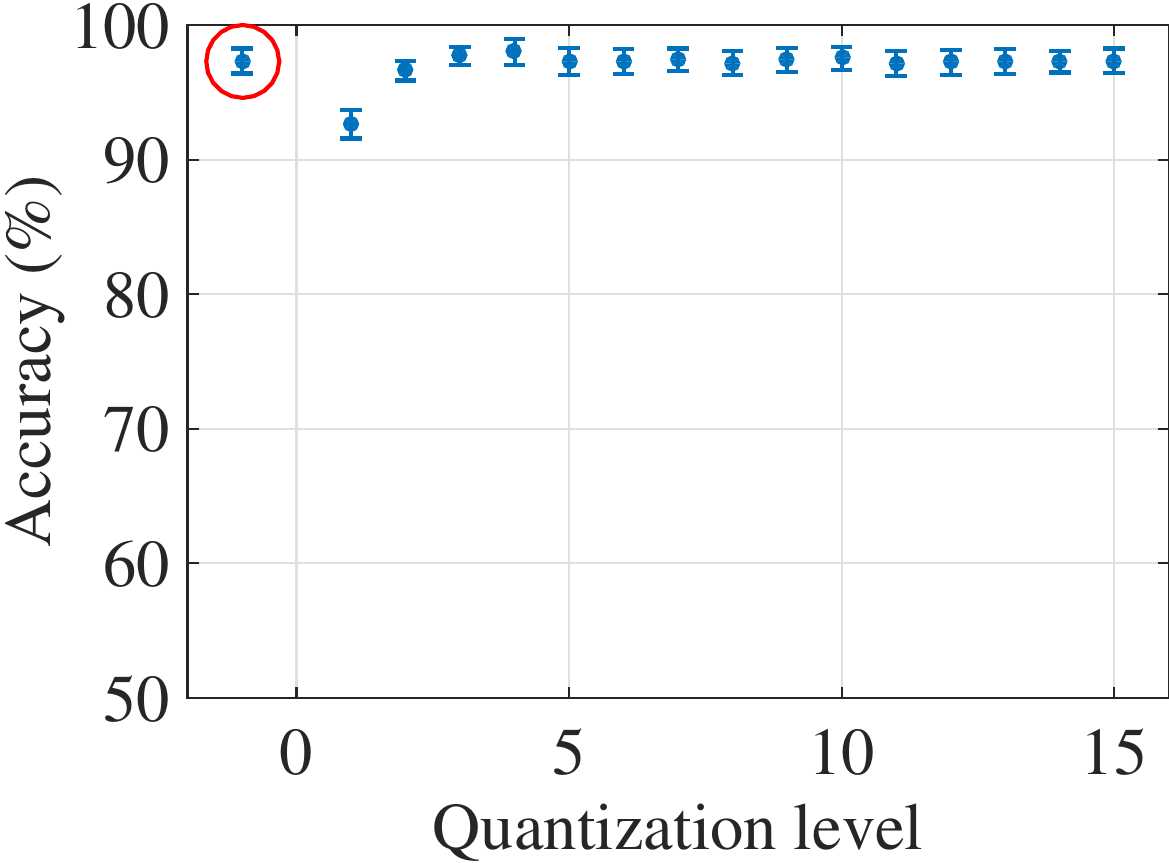}} \hspace{1em}
      \subfloat[pigskin\_pressedcl]{\includegraphics[width=0.45\textwidth]{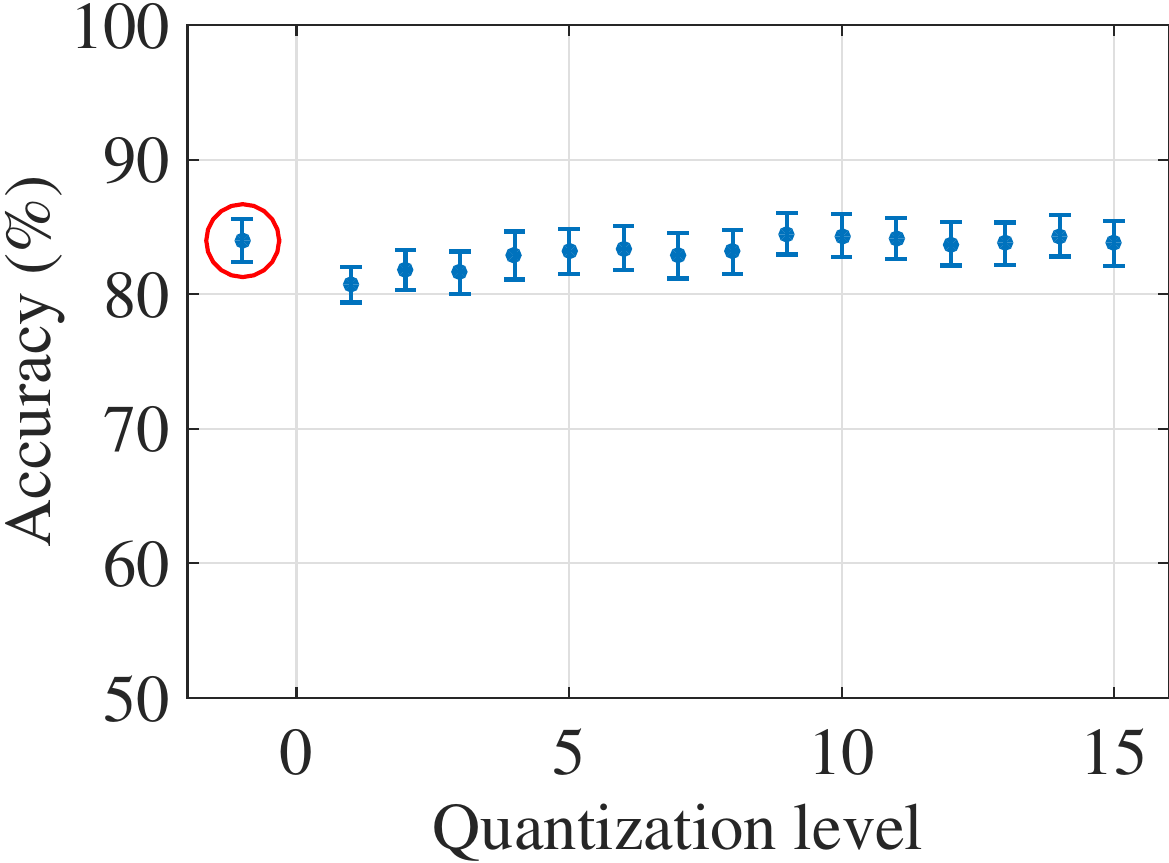}}

      \subfloat[bark\_woodgrain]{\includegraphics[width=0.45\textwidth]{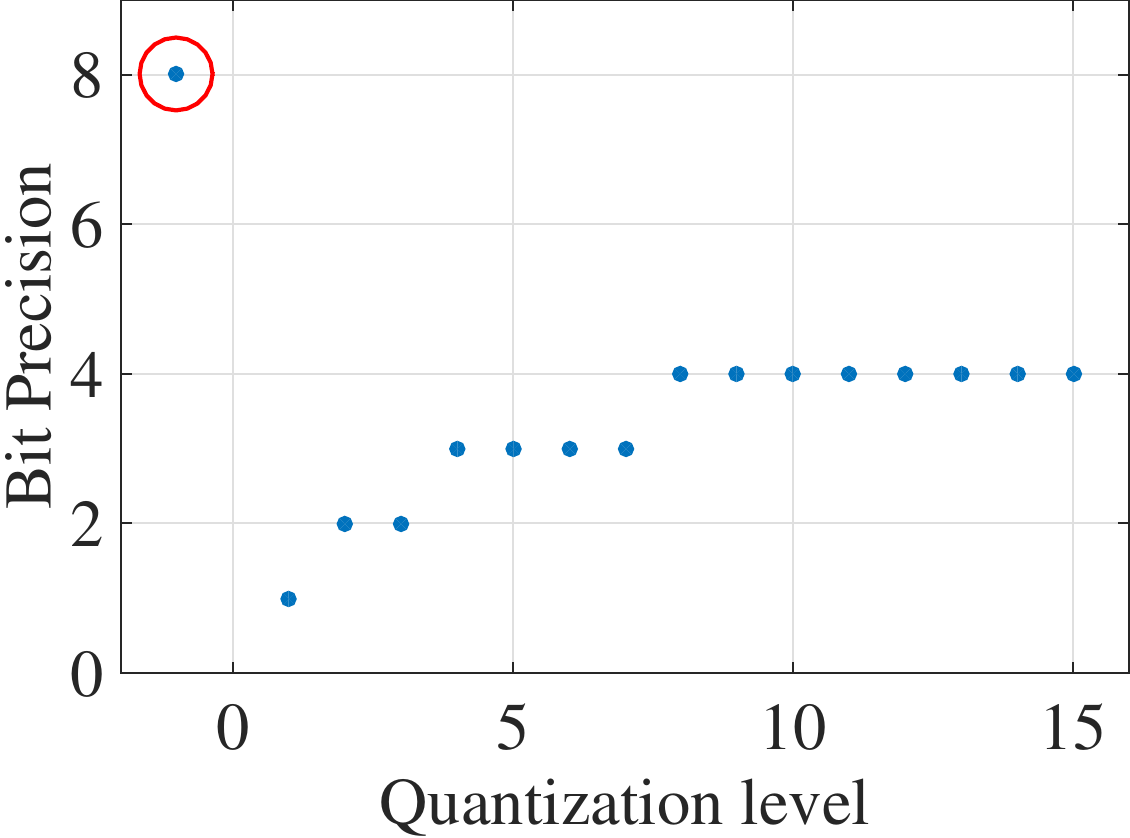}} \hspace{1em}
      \subfloat[pigskin\_pressedcl]{\includegraphics[width=0.45\textwidth]{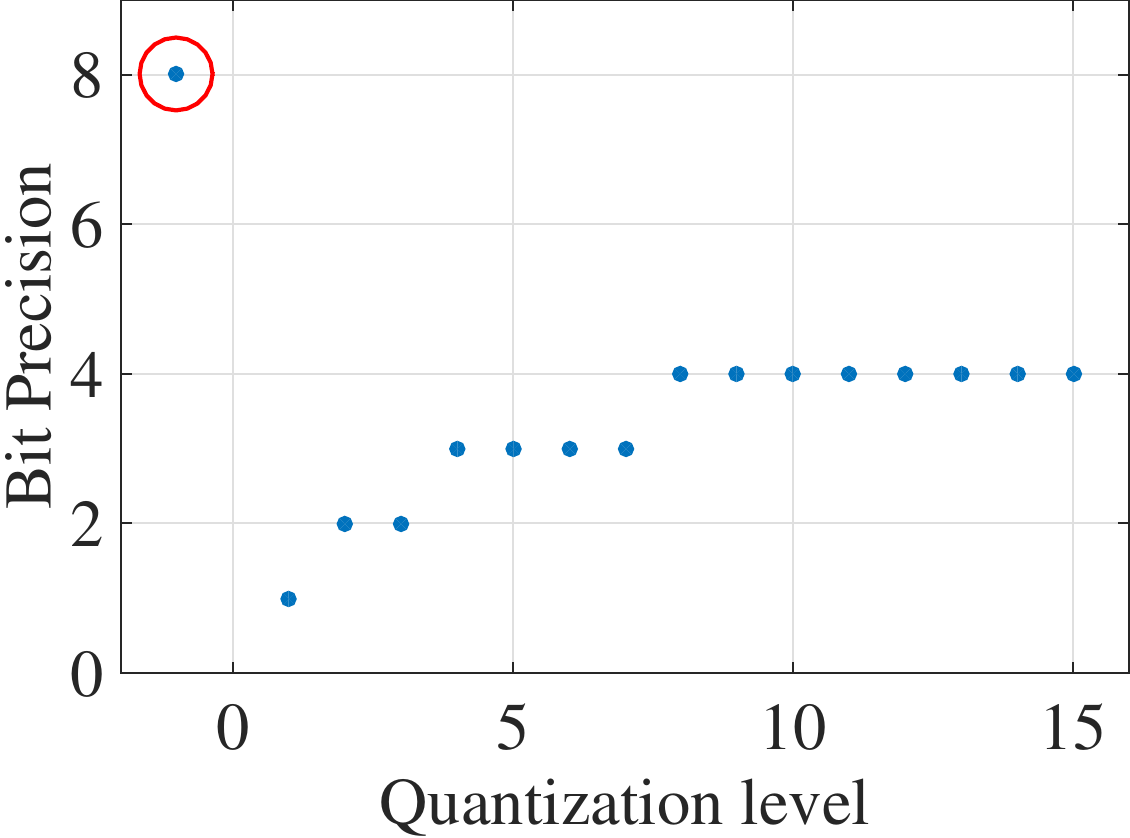}}

      \caption[]{Impact on the classification accuracy when the images of the test set are quantized up to a certain level. The original results are marked with a red circle. Note that reducing the bit precision of the test set images to as low as 2 bits does not substantially worsens the classification accuracy. These results are the average of the classification results of the test set evaluated with 10 models $(\mathbf{D},\mathbf{w})$, with 50 atoms, trained with different training sets. The datasets are described in Section~\ref{sec:datasets}.}
      \label{fig:quantization_accuracy}
\end{figure}

  \subsection{Proposed Techniques}
    \label{Proposed Techniques}
    \begin{technique}
      \label{tech:integer}
      Use signals in its raw representation (in integer) rather than their normalized version (in floating-point).
      \vspace{-4pt}
    \end{technique}

    \begin{technique}
      \label{tech:powerize}
      Powerize $\mathbf{D}$ and $\mathbf{w}$.
      \vspace{-4pt}
    \end{technique}

    \begin{technique}
      \label{tech:X_compression}
      Decrease the dynamic range of the test set $\mathbf{X}_{int}$ by quantizing the integer valued test images $\mathbf{X}_{int}$.
      \vspace{-4pt}
    \end{technique}

    \begin{technique}
      \label{tech:D_compression}
      Decrease the dynamic range of the entries of $\mathbf{D}$ by penalizing their $\ell_2$-norm during the training followed by hard-thresholding, using a trained threshold level.
      \vspace{-4pt}
    \end{technique}
    Our strategy to decrease the dynamic range of the dictionary $\mathbf{D}$ involves the addition of a penalty to the $\ell_2$ norm of its entries during the minimization of the objective function of LAST, described in (\ref{eq:objective_function_LAST}). The motivation for penalizing the ${\ell_2}$ of ${\mathbf{w}}$ and ${\mathbf{D}}$ is the fact that this can avoid solutions containing high-valued entries, which would require a representation using more bits. Also note that penalizing the ${\ell_1}$, which would seem more reasonable in terms of providing sparse dictionaries, would still allow for higher entries (even if in small numbers), which would anyway require more bits for proper quantization. The new proposed optimization problem hence becomes
    \begin{equation}
      \label{eq:objective_function_LAST_sparsify}
      \min_{\mathbf{D},\mathbf{w}} \sum^{m}_{i=1}H(y_i \mathbf{w^\top} h_{\alpha}(\mathbf{D^\top x_i})) + \frac{v}{2}\norm{\mathbf{w}}^2_2 + \frac{\kappa}{2}\norm{\mathbf{D}}^2_2,
    \end{equation}
    where $\kappa$ controls this new penalization. In Section~\ref{ell_2_penalization}, we show our proposed technique of including this penalization into general constrained optimization algorithms, followed by how we included this penalization into the difference of convex (DC) optimization algorithm used in LAST~\citep{Fawzi:2014gl}.

    After training $\mathbf{D}$ and $\mathbf{w}$ using the modified objective function (\ref{eq:objective_function_LAST_sparsify}), we apply a hard-threshold to its entries to zero out the values closer to zero. Our assumption is that these small values of $\mathbf{D}$ have little contribution on the final feature value and, thus, can be set to zero without affecting much the classification accuracy. As for the threshold value, we test the best one from all unique absolute values of $\mathbf{D}$ after it has been powerized using our Technique~\ref{tech:powerize}. As the number of unique absolute values of $\mathbf{D}$ is substantially reduced after using the Technique~\ref{tech:powerize}, the computational burden to test all possible values is greatly reduced.

  \subsection{Inclusion of an $\ell_2$ Norm Penalization Term in Dictionary Training Algorithms Based on Constrained Optimization}
    \label{ell_2_penalization}
    We show how to include a term into the objective function that penalizes potential dictionaries whose entries have larger energy values, as opposed to lower-energy dictionaries. By favoring vectors with lower energies, we may obtain dictionaries which span over narrower ranges of values. In our development, we consider the inclusion of this penalization into gradient descent (GD) methods, as many optimization problems are based on GD~\citep{Boyd:2004uz}. In our experimental evaluations, we test the proposed methods by modifying the algorithm in~\citep{Fawzi:2014gl}, which use GD to solve the optimization problem. The development in this section applies to both our modifications in~\citep{Fawzi:2014gl} and to other methods based on GD.

    Several dictionary and classifier training methods are based on constrained optimization programs such as~\citep{Fawzi:2014gl,Ravishankar:2013jj}
    \begin{equation}
      \label{eq:optimization_dictionary_classifier}
      \underset{\V,\w}{\text{min }} f(\V,\w) \text{ s.t. } g(\V,\w) = \zero,
    \end{equation}
    where:
    \begin{inparaenum}[(i)]
      \item $\V$ is an $n_1 \times 1$ vector containing the dictionary terms and ${\w}$ is an $n_2 \times 1$ vector of classifier parameters;
      \item ${f:\mathbb{R}^n\rightarrow\mathbb{R}}$, $n = n_1 + n_2$, is the cost function based on the training set;
      \item ${\zero}$ is the null vector;
      \item and ${g:\mathbb{R}^m\rightarrow\mathbb{R}}$ is a function representing ${m}$ scalar equality constraints.
    \end{inparaenum}
    Some methods also include inequality constraints.

    In order to penalize the total energy associated to the dictionary entries, we can replace any problem of the form~\eqref{eq:optimization_dictionary_classifier} by
    \begin{equation}
      \label{eq:optimization_dictionary_classifier_penalization}
      \underset{\V,\w}{\text{min }} f(\V,\w) + \kappa \, \frac{1}{2} \norm{\V}_{2}^2 \text{ s.t. } g(\V,\w) = \zero,
    \end{equation}
    where ${\kappa > 0}$ is a penalization weight.

    Iterative methods are commonly used to solve constrained optimization problems~\citep{Boyd:2004uz} such as \eqref{eq:optimization_dictionary_classifier_penalization}. They start with an initial value ${\x^{0}=[\V^{0}\,\,\,\w^{0}]^T}$ for ${\x=[\V\,\,\,\w]^T}$, which is iterated to generate a supposedly convergence sequence ${\x^{(n)}}$ satisfying
    \begin{align}\label{eq:iteration}
      \x^{(n+1)} = \x^{(n)} + \xi\Delta\x^{(n)},\,\,\, \forall\,\,n\geq 0,
    \end{align}
    where ${\xi}$ is the step size and ${\Delta\x^{(n)}=[\Delta\V^{(n)}\,\,\,\Delta\w^{(n)}]}$ is the step computed based on the particular iterative method.

    We consider the GD method, where computing ${\Delta\x^{(n)}}$ requires evaluating the gradient of a dual function associated with the objective function and the constraints~\citep{Boyd:2004uz}. Specifically, the Lagrangian ${L(\V,\w)}$ is an example of a dual function, thus having a local maximum that is a minimum of the objective function at a point that satisfies the constraints. For problems~\eqref{eq:optimization_dictionary_classifier} and~\eqref{eq:optimization_dictionary_classifier_penalization}, the Lagrangian functions are given respectively by
    \begin{gather}
      \label{eq:originalLagrangian}
      L(\V,\w,\vlambda) = f(\V,\w) +\vlambda^Tg(\V,\w) \text{ and} \\
      \label{eq:modifiedLagrangian}
      \hat{L}(\V, \w, \vlambda) = f(\V, \w) + \vlambda^Tg(\V, \w) + \kappa \, \frac{1}{2}\norm{\V}_{2}^2,
    \end{gather}
    with ${\vlambda}$ the vector of ${m}$ Lagrange multipliers.

    Our first objective regarding solving the modified problem~\eqref{eq:optimization_dictionary_classifier_penalization} is to compute the gradient of ${\hat{L}(\V, \w, \vlambda)}$ in terms of the gradient of ${L(\V, \w, \lambda)}$, so as to show how a problem that solves~\eqref{eq:optimization_dictionary_classifier} can be modified in order to solve~\eqref{eq:optimization_dictionary_classifier_penalization}.
%
%
    By comparing~\eqref{eq:originalLagrangian} and \eqref{eq:modifiedLagrangian}, and by defining ${\nabla_\v g}$ as the gradient of any function ${g}$ with respect to vector ${\v}$ as we compute the gradients, we obtain
%
%
    \begin{gather}
      \label{eq:gradientd}
      \nabla_{\V}\hat{L}(\V,\w,\vlambda) = \nabla_{\V}{L}(\V,\w,\vlambda) + \kappa\V, \\
      \label{eq:gradientw}
      \nabla_{\w}\hat{L}(\V,\w,\vlambda) = \nabla_{\w}{L}(\V,\w,\vlambda) \text{, and} \\
      \label{eq:gradientl}
      \nabla_{\vlambda}\hat{L}(\V,\w,\vlambda) = \nabla_{\vlambda}{L}(\V,\w,\vlambda).
    \end{gather}
    
    Equations~\eqref{eq:gradientd}, \eqref{eq:gradientw}, and \eqref{eq:gradientl} show how we modify the estimated gradient in any GD method (such as LAST~\citep{Fawzi:2014gl}) in order to penalize the range of the dictionary entries, and thus try to force a solution with a narrower range. Note that only the gradient with respect to the dictionaries is altered.

\section{Simulations}
  \label{Simulations}
  In this section, we evaluate how our techniques affect the accuracy of LAST on the same datasets used in \citep{Fawzi:2014gl}. For this, we performed many simulations using the datasets presented in Section~\ref{sec:datasets} and compared their classification accuracy/error and classification bit precision, that is, minimum number of bits necessary to perform the classification. We present in Section~\ref{Choice of Classifier Parameters} the parameters we chose to generate these models and, at last, the analysis of the results we obtained comes in Section~\ref{Results and Analyses}.

  \subsection{Choice of Classifier Parameters}
    \label{Choice of Classifier Parameters}
    For all tested datasets, we fixed the parameter $\kappa = \{4, 8, 10, \dotsc, 20\}*10^{-3}$ and let $z\_threshold$ assume all unique values of the powerized version of $\mathbf{D}_{power}$, i.e., after applying the Technique~\ref{tech:powerize}. As the number of unique values of $\mathbf{D}_{power}$ is substantially lower than the ones of $\mathbf{D}$, the necessary computational burden to test all valid thresholds is low. Also, we fixed the quantization parameter ${\tt quanta} = \{1,2,\dotsc,10\} \cup \{31,127\}$. The choice of these parameter values was empirically based on a previous run of all simulations. As for the parameters in LAST, we used the same used in \citep{Fawzi:2014gl}. We direct the reader to \citep{Fawzi:2014gl} for further understanding of the parameters and their values used in LAST.

  \subsection{Model Selection}
    \label{Model Selection}
    Due to the large number of parameter combinations of both Technique~\ref{tech:X_compression} and Technique~\ref{tech:D_compression} our simulations generate many different models with classification accuracy/error and classification bit precision. To select the best model, that is, the best combination of the parameters $\kappa$, $z\_threshold$, and $quanta$ we relied on the classification accuracy on a separate data set. Also, we created the parameter $\gamma$ to control the trade-off between the classification accuracy and the classification bit precision. We used $\gamma = 0.001$ and the following steps for the model selection:
    \begin{inparaenum}[(i)] 
      \item First, we used $80\%$ of the training set to train the models ($\mathbf{D}$ and $\mathbf{w}$) and used the remaining $20\%$ to estimate the best combination of the parameters $\kappa$, $z\_threshold$, and $quanta$.
      \item Let $\mathcal{M}$ be the set of models trained with all combinations of the parameters $\kappa$, $z\_threshold$, and $quanta$. Also, let $\mathcal{R}=\mathcal{M}(\mathbf{X})$ be the set of the classification results of the training set $\mathbf{X}$ using the models $\mathcal{M}$ and $best\_acc$ be the best training accuracy from $\mathcal{R}$.
      \item From $\mathcal{M}$, we create the subset $\mathcal{M}_{\gamma}$ that contains the models with results $\mathcal{R}_{\gamma} = \mathcal{R}[\text{accuracy} >= (1 - \gamma) \, best\_acc]$. 
      \item From $\mathcal{M}_{\gamma}$, we create a new subset $\mathcal{M}_{bits}$ with results $\mathcal{R}_{bits} = \mathcal{R}_{\gamma}[\text{number of bits} == lowest\_num\_bits]$, where $lowest\_num\_bits$ is the lowest number of bits necessary for the computation of $\mathbf{D^{\top}X}$.
      \item From $\mathcal{R}_{bits}$, we finally choose the model $\mathcal{M}_{best}$ such that the result $\mathcal{R}_{best} = \mathcal{R}_{bits}[\text{sparsest representation of }\mathbf{X}]$.
    \end{inparaenum}    

    It is worth noting that the traditional rule of thumb of using $2/3$ of the dataset to train and $1/3$ to test is a safe way of estimating of the true classification accuracy when the classification accuracy on the whole dataset set is higher than $85\%$ \citep{Dobbin:2011dv}. Nevertheless, as we are solely reserving part of the training set for the selection of the best parameters values, and not for the estimation of the true classification accuracy, we opted for the more conservative proportion of $80\%$ to train our models. This has the advantage of lowering the chance of missing an underrepresented training set sample. Moreover, the last step in our model selection algorithm selects the model that produces the sparsest signal representation, as it leads to models that generalize better \citep{Bengio:2013bu}.

  \subsection{Results and Analyses}
    \label{Results and Analyses}
    In this section, the \emph{original} results are the ones from the classification of the test set using the model built with the original LAST algorithm. Conversely, the \emph{proposed} results are the ones obtained from the classification of the test set using the best model $\mathcal{R}_{best}$ built for each dataset. The best model $\mathcal{R}_{best}$ is the one selected using the methodology presented in Section~\ref{Model Selection}. 

    We show the results of our simulations on the binary tasks in Figure~\ref{fig:binary_dict_size_vs_acc}. As shown on the bottom of Figures~\ref{fig:binary_dict_size_vs_acc}(a), \ref{fig:binary_dict_size_vs_acc}(b), and \ref{fig:binary_dict_size_vs_acc}(c), our techniques do not substantially decrease the original classification accuracy. At the same time, our techniques considerably reduce the number of bits necessary to perform the multiplication $\mathbf{D}^{\top}\mathbf{X}$, as shown on the top of Figures~\ref{fig:binary_dict_size_vs_acc}(a), \ref{fig:binary_dict_size_vs_acc}(b), and \ref{fig:binary_dict_size_vs_acc}(c).

    One can note the original results in Figures~\ref{fig:binary_dict_size_vs_acc}(a) and \ref{fig:binary_dict_size_vs_acc}(c) are lower than the ones presented in~\citep{Fawzi:2014gl}. Differently from their work, we used completely disjoint training and test sets (with no overlap) to allow a better estimation of the true classification accuracy.

    \begin{figure}[ht]
      \centering
      
      \centering
      \subfloat[bark\_woodgrain]{
        \begin{minipage}{0.32\columnwidth}
          \includegraphics[width=1\textwidth]{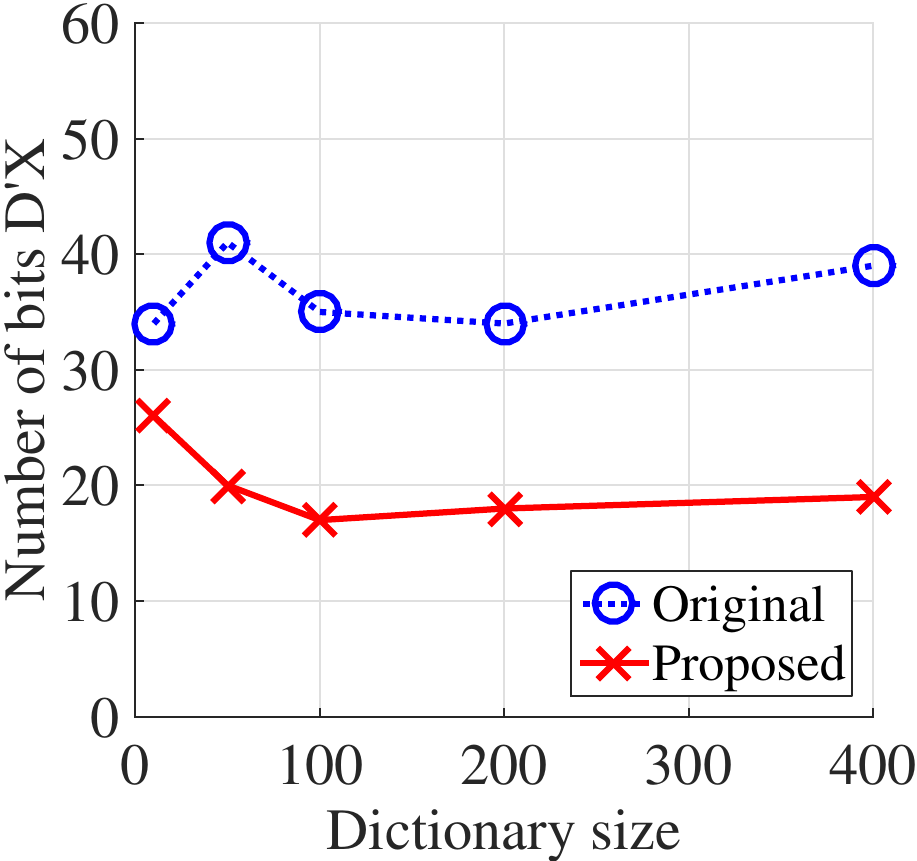} \\
          \includegraphics[width=1.037\textwidth,right]{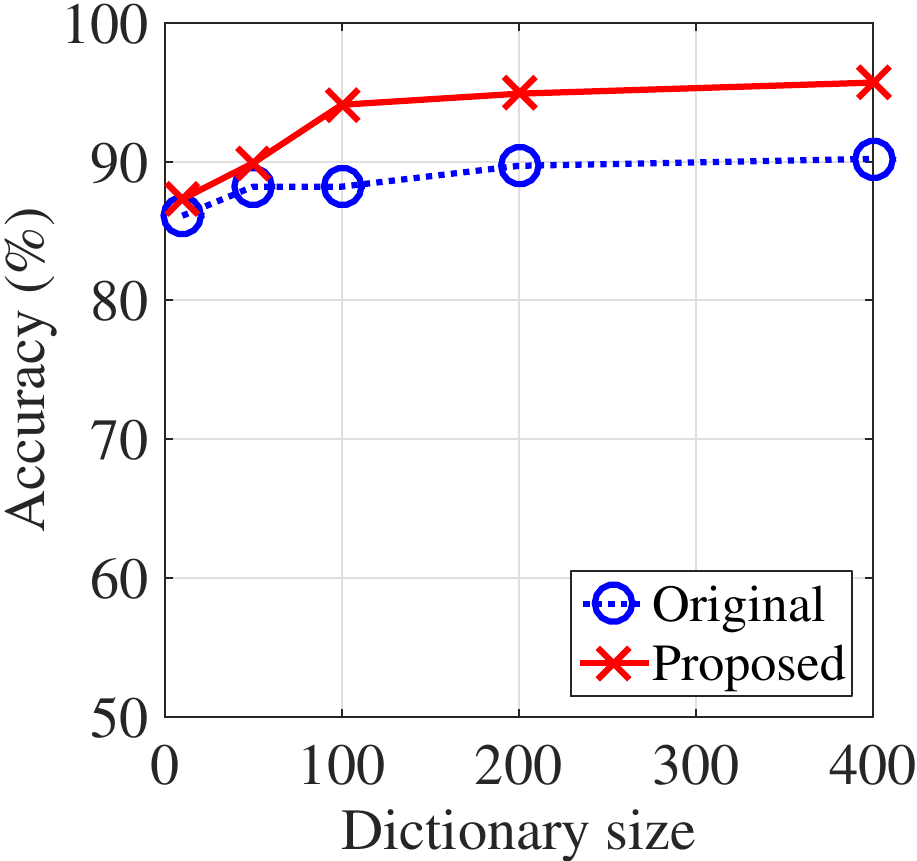}
        \end{minipage}
      } 
      \subfloat[pigskin\_pressedcl]{
        \begin{minipage}{0.32\columnwidth}
          \includegraphics[width=1\textwidth]{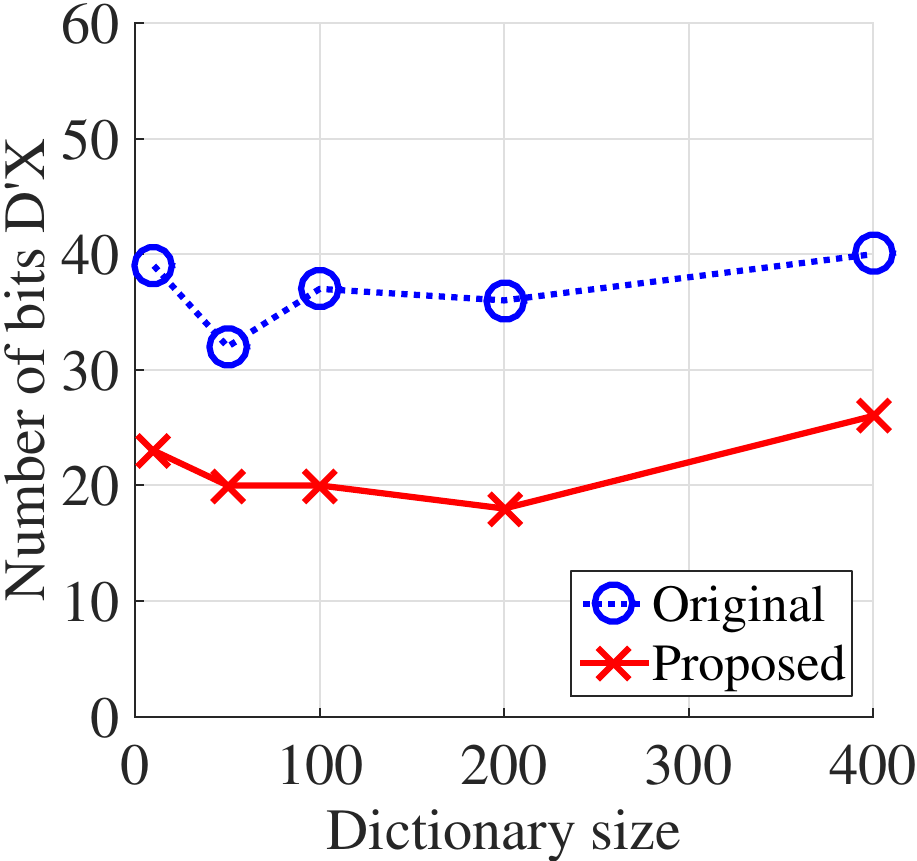} \\
          \includegraphics[width=1.037\textwidth,right]{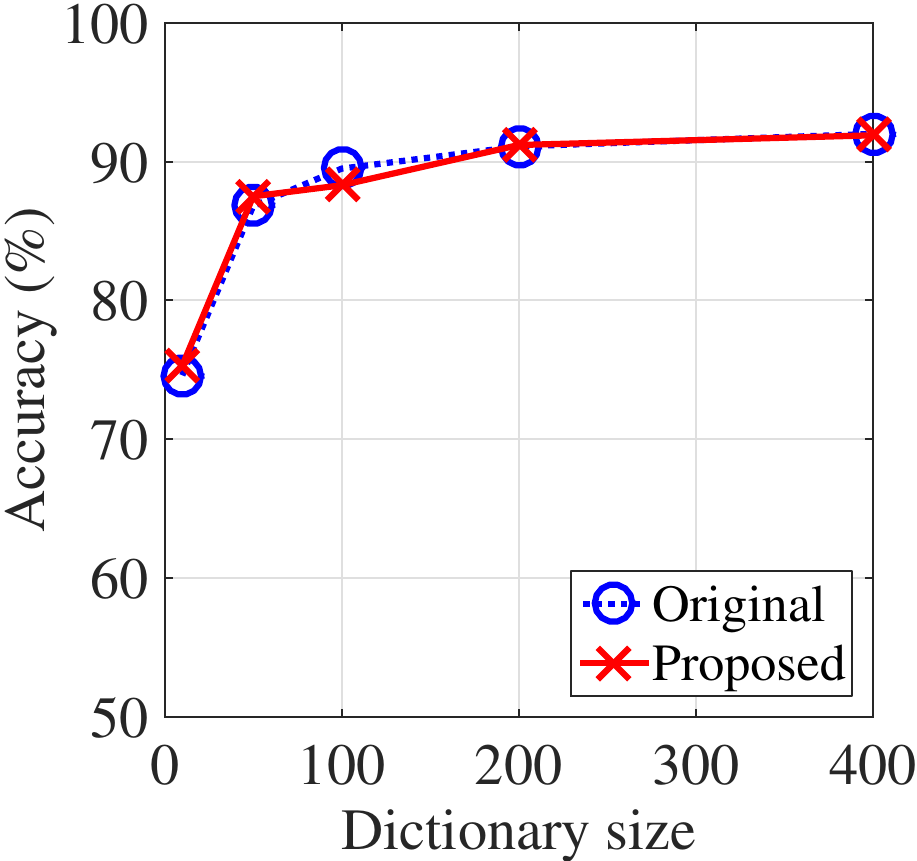}
        \end{minipage}
      } 
      \subfloat[CIFAR-10 deer\_horse]{
        \begin{minipage}{0.32\columnwidth}
          \includegraphics[width=1\textwidth]{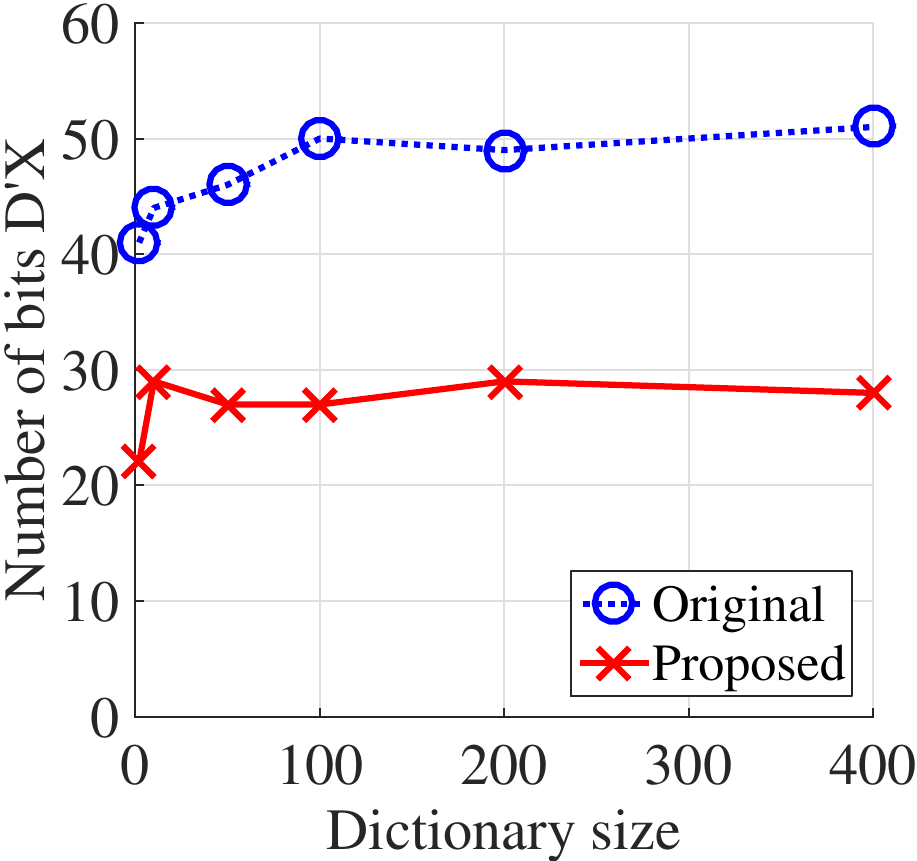} \\
          \includegraphics[width=1.037\textwidth,right]{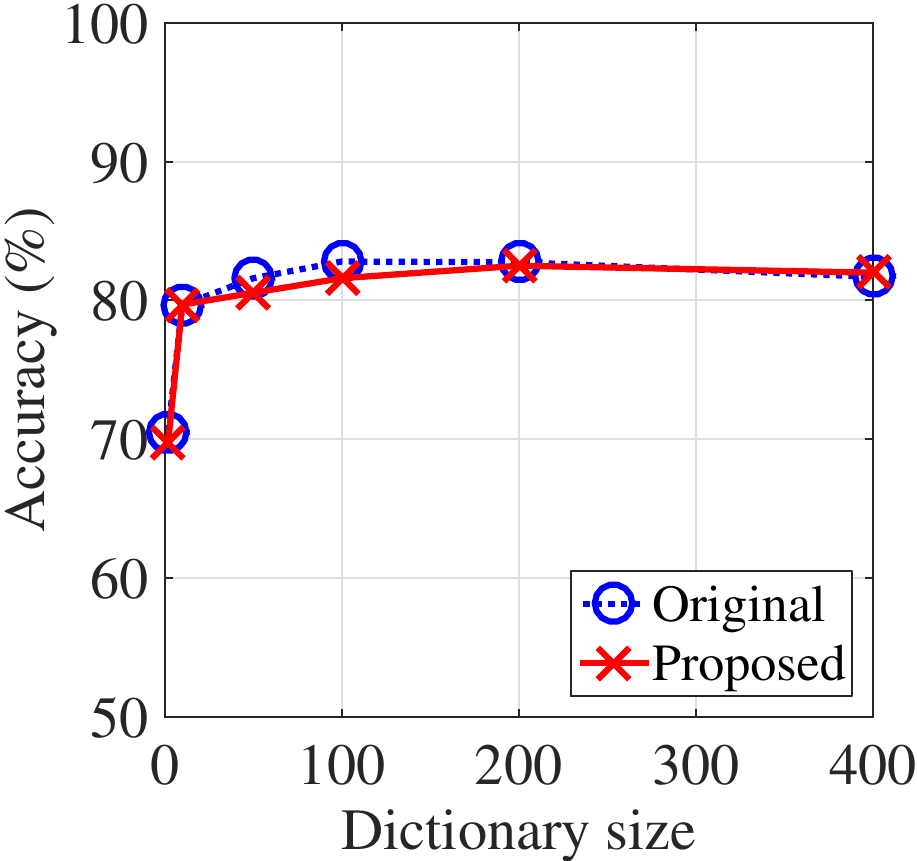}
        \end{minipage}
      } 



      \caption{Comparison of the results using the original LAST algorithm and our proposed techniques. Regarding the classification at test time, these figures show for each dataset the trade-off between the necessary number of bits (top) and the classification accuracy (bottom). Our approach reduces the necessary number of bits to almost half of the original formulation at the cost of a slight classification accuracy decrease. The datasets are described in Section~\ref{sec:datasets}.}
      \label{fig:binary_dict_size_vs_acc}
\end{figure}

    Table~\ref{table:table_mnist_cifar_10_error_and_num_bits_D_X} contains the results of the simulations on the tasks MNIST and CIFAR-10. The original results we obtained for both large datasets have a slightly higher classification error than the ones reported in \citep{Fawzi:2014gl}. We hypothesize that this is caused by the random nature of LAST for larger datasets, where each GD is optimized for a small portion of the data called mini-batch, which is randomly sampled from the training set. Moreover, we trained $\mathbf{D}$ and $\mathbf{w}$ using $4/5$ of the training set used in \citep{Fawzi:2014gl} and this may negatively affect the generalization power of the dictionary and classifier. 

    Note that our techniques resulted in an increase of the classification error on both MNIST and CIFAR-10 tasks. Nevertheless, our techniques reduced the number of bits necessary to run the classification at test time. Again, this dynamic range reduction is highly valuable for applications on FPGA.


    \begin{table}[ht]
      \centering
      \small
        \caption[...]{Comparison between the original and the proposed results regarding the classification error and number of bits necessary to compute the matrix-vector multiplication $\mathbf{D}^{\top}\mathbf{X}$ of the sparse representation.}
      \begin{tabular}{lcccc}  
        \toprule 
        \ExecuteMetaData[my_files/table_mnist_cifar_10_error_and_num_bits_D_X.tex]{header}
        \midrule 
        \ExecuteMetaData[my_files/table_mnist_cifar_10_error_and_num_bits_D_X.tex]{data}
        \bottomrule 
        \label{table:table_mnist_cifar_10_error_and_num_bits_D_X}
      \end{tabular}
\end{table}

    The results we presented in this section indicate the feasibility of using integer operations in place of floating-point ones and use bit shifts instead of multiplications with a slight classification accuracy decrease. These substitutions reduce the computational cost of classification at test time in FPGAs, which is important in embedded applications, where power consumption is critical. Moreover, our techniques reduce almost in half the number of bits necessary to perform the most expensive operation in the classification, the matrix-vector multiplication $\mathbf{D}^{\top}\mathbf{X}$. This was a result of the application of both Technique~\ref{tech:X_compression} and Technique~\ref{tech:D_compression}.

    Also, it is worth noting that our techniques were developed to reduce the computational cost of the classification with an expected accuracy reduction, within acceptable limits. Nevertheless, the classification accuracies on the bark\_woodgrain dataset using our techniques substantially outperforms the accuracies using the original model, as shown in Figure~\ref{fig:binary_dict_size_vs_acc}(a)(bottom). These new higher accuracies were unexpected. Regarding the original models, we noted that the classification accuracies on the training set were 100\% when using dictionaries with at least 50 atoms. These models were probably overfitted to the training set, making them fail to generalize to new data. As our powerize technique introduces a perturbation to the entries of both $\mathbf{D}$ and $\mathbf{w}$, we hypothesize that it reduced the overfitting of $\mathbf{D}$ and $\mathbf{w}$ to the training set and, consequently, increased their generalization power on unseen data \citep{Pfahringer:1995wk}. However, this needs further investigation.

\section{Conclusion}
  \label{Conclusion}
  This paper presented a set of techniques for the reduction of the computations at test time of classifiers that are based on learned transform and soft-threshold. Basically the techniques are: adjust the threshold so the classifier can use signals represented in integer instead of their normalized version in floating-point; reduce the multiplications to simple bit shifts by approximating the entries from both dictionary $\mathbf{D}$ and classifier vector $\mathbf{w}$ to the nearest power of 2; and increase the sparsity of the dictionary $\mathbf{D}$ by applying a hard-thresholding to its entries. We ran simulations using the same datasets used in the original paper that introduces LAST and our results indicate that our techniques substantially reduce the computation load at a small cost of the classification accuracy. Moreover, in one of the datasets tested there was a substantial increase in the accuracy of the classifier. These proposed optimization techniques are valuable in applications where power consumption is critical.

\section*{Acknowledgments}
  \label{Acknowledgments}
  This work was partially supported by a scholarship from the Coordination of Improvement of Higher Education Personnel (Portuguese acronym CAPES). We thank the Dept. of ECE of the UTEP for allowing us access to the NSF-supported cluster (NSF CNS-0709438) used in all the simulations here described and also Mr. N. Gumataotao for his assistance with it. We thank Mr. A. Fawzi for the source code of LAST and all the help with its details. We also thank Dr. G. von Borries for fruitful cooperation and discussions.

  \bibliographystyle{elsarticle-num} 
  \bibliography{my_files/all_papers}

\end{document}